\documentclass{article}
\usepackage[preprint]{colm2026_conference}
\usepackage{microtype}
\usepackage{hyperref}
\usepackage{url}
\usepackage{multirow} 
\usepackage{booktabs}
\usepackage{caption}
\usepackage{subcaption}
\usepackage{longtable}
\usepackage{amsmath, amssymb, mathtools, amsthm, thm-restate}
\usepackage{algorithm}
\usepackage[noend]{algpseudocode}
\usepackage[dvipsnames]{xcolor}
\usepackage{tabularx}
\theoremstyle{plain}
\newtheorem{theorem}{Theorem}[section]

\newtheorem{lemma}[theorem]{Lemma}

\theoremstyle{definition}
\newtheorem{definition}[theorem]{Definition}

\theoremstyle{remark}

\newcommand{\R}{\mathbb{R}}

\newcommand{\simplex}{\mathbf{\Delta}}

\usepackage{lineno}

\definecolor{darkblue}{rgb}{0, 0, 0.5}
\hypersetup{colorlinks=true, citecolor=darkblue, linkcolor=darkblue, urlcolor=darkblue}

\title{Many Preferences, Few Policies: \\ Towards Scalable Language Model Personalization}

{\centering
    \author{
    \begin{tabular}{ccc}
        Cheol Woo Kim\thanks{Equal contribution} & Jai Moondra\footnotemark[\value{footnote}] & Roozbeh Nahavandi \\[2pt]
        Harvard University & Carnegie Mellon University & The Ohio State University
    \end{tabular}
    \AND
    \begin{tabular}{ccc}
        Andrew Perrault & Milind Tambe & Swati Gupta \\[2pt]
        The Ohio State University & Harvard University & Massachusetts Institute of\\[2pt] & & Technology
    \end{tabular}
    }
}

\newcommand{\opt}{\mathbf{opt}}

\begin{document}

\ifcolmsubmission
    \linenumbers
\fi

{\centering 
    \maketitle
} 

\begin{abstract}
    The holy grail of LLM personalization is a single LLM for each user, perfectly aligned with that user's preferences. However, maintaining a separate LLM per user is impractical due to constraints on compute, memory, and system complexity. We address this challenge by developing a principled method for selecting a small portfolio of LLMs that captures representative behaviors across heterogeneous users. We model user preferences across multiple traits (e.g., safety, humor, brevity) through a multi-dimensional weight vector. Given reward functions across these dimensions, our algorithm {\sc PALM} ({\bf P}ortfolio of {\bf A}ligned L{\bf LM}s) generates a small portfolio of LLMs such that, for any weight vector, the portfolio contains a near-optimal LLM for the corresponding scalarized objective.  To the best of our knowledge, this is the first result that provides theoretical guarantees on both the size and approximation quality of LLM portfolios for personalization. It characterizes the trade-off between system cost and personalization, as well as the diversity of LLMs required to cover the landscape of user preferences. We provide empirical results that validate these guarantees and demonstrate greater output diversity over common baselines.
\end{abstract}

\section{Introduction}

As large language models (LLMs) are increasingly deployed across a wide range of real-world applications, the demand for personalization has grown correspondingly \citep{sorensen24, xie2025survey}. Users differ substantially in how they value competing objectives such as helpfulness, harmlessness, conciseness, and factuality. These objectives are often inherently in tension (e.g., brevity and helpfulness), and no single LLM can simultaneously optimize all of them \citep{wu2023morlhf}. 

Ideally, each user would be served by a LLM tailored to their individual preferences. In practice, however, providing a large number of LLMs introduces substantial overhead in training, hosting, governance, and safety assurance \citep{Bommasani2021FoundationModels, anthropic2026opus3}. At the systems level, serving multiple LLMs is both memory and compute intensive, requiring careful scheduling, resource sharing, and orchestration to remain efficient \citep{sun2024llumnix, miao2025survey}. Even approaches that share a base model across tasks (e.g., via LoRA-style adapters) highlight the difficulty of scaling to many personalized variants without incurring significant system complexity and deployment overhead \citep{chen2023punica,hu2022lora, ni2025predictivelora}. 

This challenge is already visible in industry, where frontier AI providers routinely retire older LLMs despite continued user demand \citep{anthropic2025deprecation}. Indeed, Anthropic notes in its deprecation announcements that the cost and complexity of maintaining multiple LLMs scale roughly linearly with the number deployed \citep{anthropic2026opus3}. As a result, current systems tend toward the opposite extreme, serving all users with a small fixed set of LLMs to minimize operational costs while serving diverse users. However, the problem of choosing this small set of LLMs to maximize diversity is non-trivial, and it is unclear if such LLMs together capture the preferences of all users. 

This reveals a fundamental trade-off between personalization and operational scalability. At one extreme, each user is assigned a fully personalized LLM, maximizing alignment but incurring prohibitive system costs. At the other extreme, a single shared LLM serves all users, enabling efficient deployment but limiting the ability to capture diverse preferences. While inference-time methods such as prompting or retrieval-based approaches \citep{google2026personal_intelligence, prahlad25} can partially bridge this gap by inducing tailored behaviors from a single LLM, they are fundamentally constrained by the expressiveness of the underlying models \citep{openai2026cotcontrollability, chakrabarty2025readers} and add computational overhead at every query \citep{hu2026inferencetimealignmentsparsejunction, yun-etal-2025-price}.

In light of this tension, the central question we ask is the following:
\vspace{-0.2cm}
\begin{center}
\emph{
    How can we choose LLMs that serve diverse user preferences while remaining efficient and scalable, and how can this trade-off be explicitly controlled?
}
\end{center}

We study these questions within the widely used paradigm of multi-objective alignment with linear scalarization \citep{wu2023morlhf, zhou2024modpo, shi2024decoding}, where each user’s preferences are encoded as a weight vector over a fixed set of reward functions. Given a weight vector, a personalized policy can be obtained by optimizing the weighted linear combination of rewards (often with a regularization term) using reinforcement learning (RL) algorithms such as PPO or GRPO \citep{schulman2017ppo, shao2024grpo}. Our key observation is that, although the space of weight vectors is continuous, a policy optimized for a single scalarized objective can be near-optimal for infinitely many weight vectors, and a small set of well-chosen policies suffices to cover the entire weight space. We show that scalable personalization is therefore possible: the number of policies required is governed by the number of reward functions, independent of the number of users, thereby enabling a personalization framework whose cost does not scale with the user population. 
 
\begin{figure}[t]
    \centering\vspace{-0.5cm}
\includegraphics[width=\linewidth]{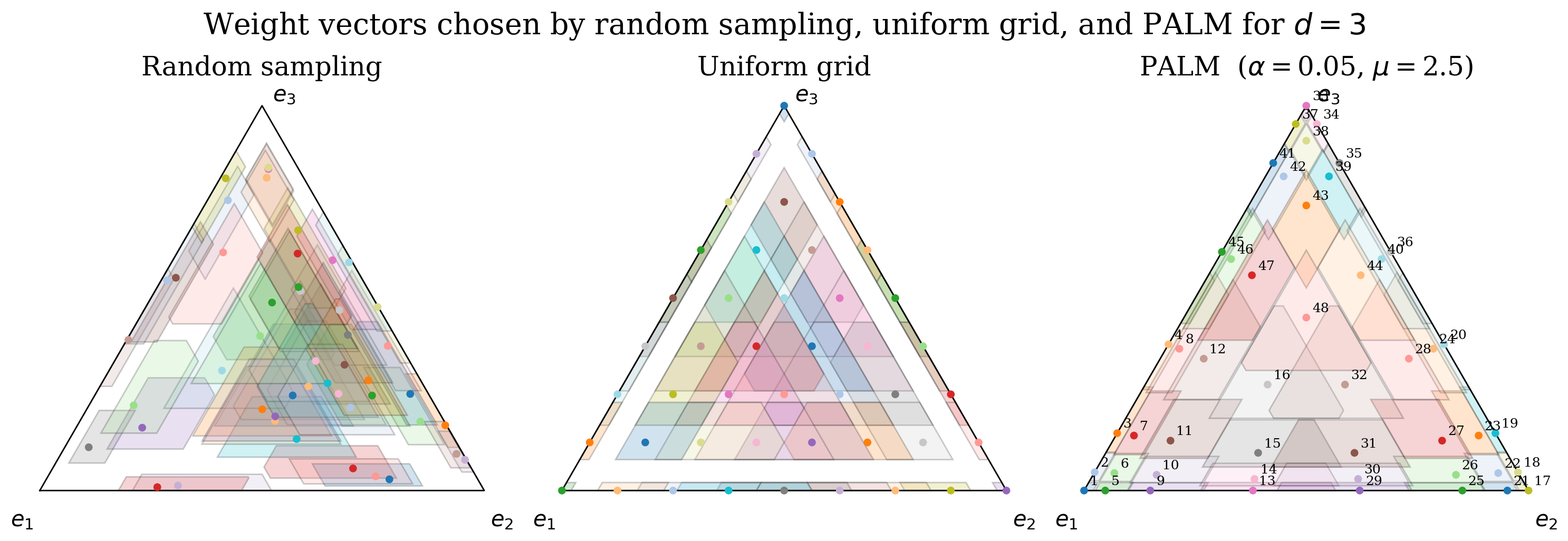}
    \caption{Comparison of weight selection methods in $\{w \in \mathbb{R}^3 : w_1 + w_2 + w_3 = 1,\; w \geq 0\}$. Shown are weights obtained by random sampling (left), a uniform grid (center), and PALM (right), each using 48 weight vectors. Each point corresponds to a selected weight vector $w$, and the surrounding cell represents the set of weight vectors $v$ approximated by $w$, defined by $|w_i - v_i| \le \varepsilon' v_i + \delta'$ with $(\varepsilon', \delta') = (2/5,\, 1/80)$. Random sampling and the uniform grid leave regions of the simplex uncovered, whereas PALM achieves full coverage.}
    \label{fig:uniform-grid}
    \vspace{-0.5cm}
\end{figure}

We propose constructing a \emph{{small} portfolio} of LLMs that approximates the entire family of optimal LLMs induced by weight vectors in the probability simplex. Specifically, we design an algorithm that produces a set of {LLMs} such that, for \textbf{any} normalized weight vector, at least one {LLM} in the portfolio is approximately optimal for the corresponding scalarized objective. Our algorithm, {\sc PALM} ({\bf P}ortfolio of {\bf A}ligned L{\bf LM}s), provably constructs such a portfolio, with its size bounded as a function of the number of reward functions and a prescribed performance guarantee, expressed in terms of approximation ratios. At a high level, we construct structured weight vectors informed by the reward landscape and learn the corresponding policies, significantly outperforming those trained on uniformly or randomly sampled weights (see Figure~\ref{fig:uniform-grid}).

These guarantees make explicit a principled trade-off between personalization fidelity and operational simplicity: tighter approximation yields more faithful behavioral coverage at the cost of a larger portfolio, while looser approximation permits a more compact set. Crucially, the guarantee holds uniformly over all weight vectors in the probability simplex, so the required portfolio size does not scale with the number of users. As a result, our approach replaces per-user personalization with a small, fixed portfolio capable of serving a large and diverse user base.

Empirically, {\sc PALM} outperforms standard heuristics -- such as uniformly spaced or randomly selected weight vectors -- at the same portfolio size in terms of approximation quality. We also provide qualitative evidence that our portfolio yields more diverse responses, whereas the baselines tend to miss critical behavioral regimes or produce redundant LLMs. In fact, since {\sc PALM} guarantees coverage over all weight vectors, any lack of response diversity in our portfolio -- if observed -- could in principle be interpreted as a diagnostic signal of limitations in the underlying personalization method, such as cases where different reward functions fail to capture meaningfully distinct signals.

Overall, our results show that the cost of personalization need not scale with the number of users. Instead, a carefully constructed portfolio can serve a large and diverse population while remaining efficient, auditable, and practical to deploy. This perspective reframes personalization as a problem of approximating a structured space of behaviors, rather than tailoring LLMs to individual users. 

\section{Related work}

\textbf{Preference modeling for LLM personalization.} To personalize LLMs for users with heterogeneous preferences, a line of work focuses on learning personalized weight vectors within the multi-objective alignment framework. \citet{shenfeld2025language, bose2025lore} first learn multiple canonical reward models under low-rank assumptions, and then estimate user-specific weights from pairwise comparison data. \cite{shao23} develop a theoretical framework that additionally incorporates policy-level (rather than trajectory-level) comparisons; however, the theoretical analysis does not account for regularization terms. These approaches require repeated feedback \textbf{per user}, which is costly to collect and may be noisy or shift over time. Moreover, when personalization is coupled with additional fine-tuning or adaptation for the inferred weights, the computational cost scales with the number of users, potentially becoming a bottleneck in large-scale deployment. 

Alternatively, users may directly specify weight vectors based on their desired balance across objectives. However, for complex generative models such as LLMs, the effect of weight adjustments on model behavior is highly opaque and difficult to predict a priori. Ad-hoc experimentation with weight vectors does not provide a systematic view of the LLM behavior landscape: it fails to reveal how many qualitatively distinct behavioral regimes exist, where sharp transitions occur, or which behaviors are feasible. These challenges in selecting appropriate weight vectors persist even for inference-time methods that avoid costly fine-tuning for each weight.

Our portfolio approach offers a complementary alternative that aims to provide \emph{global coverage} over weight vectors using a finite set of policies. This enables a deployment pipeline in which a service provider releases a small number of policies and allows users to interact with or select among them in a manner that best reflects their preferences. Moreover, this approach does not require committing to a stylized preference elicitation protocol and assumptions (e.g., pairwise comparisons or Bradley-Terry model \citep{BT}) to learn the weight vector, and instead supports flexible, user-driven interaction. 

\textbf{Multi-objective alignment for LLMs.} \cite{wu2023morlhf} propose a multi-objective extension of reinforcement learning with human feedback, while \cite{zhou2024modpo, gupta25modpo} develop multi-objective variants of direct preference optimization (DPO) \citep{rafailov2023direct} for more efficient fine-tuning. Complementary to fine-tuning approaches, a number of inference-time methods enable adjusting LLM behavior at deployment time, offering a computationally efficient alternative to alignment via retraining \citep{yang24ric, shi2024decoding, wang2024conditional, lin2025parm, chen2025pad}. However, inference-time methods are not perfect substitutes for fine-tuning and are known to suffer from several limitations, including additional computational overhead at inference time \citep{shi2024decoding}, inconsistencies or degradation in generation quality \citep{shen2025simultaneous, hu2026inferencetimealignmentsparsejunction}, and misalignment between generated outputs and user preferences \citep{lin2025parm}.

\textbf{Portfolios in multi-objective RL and optimization.} \citet{gupta_which_2023} introduced the notion of portfolios with formal approximation guarantees for facility location problems. Subsequent works \citep{gupta_balancing_2025, drygala_data-driven_2025} extend this idea to related combinatorial optimization settings. In multi-objective reinforcement learning (MORL), \citet{kim_navigating_2025} construct portfolios of RL policies with formal approximation guarantees. However, these works either focus on different classes of scalarization functions (e.g., $p$-means objectives) or rely on combinatorial structure in the solution space. As a result, their theoretical guarantees do not directly extend to LLM fine-tuning objectives.

The portfolio notion we adopt can be viewed as a relaxed variant of convex coverage sets in the MORL literature \citep{hayes22, alegre23}. {Our algorithm is based on \cite{gupta_which_2023}'s algorithm for minimization problems for facility location. We extend and adapt it for maximization problems and LLMs, and present a more general analysis.} To the best of our knowledge, our work is the first to provide an algorithm with explicit theoretical guarantees on both the size of the portfolio and its approximation quality in this setting. Moreover, classic MORL formulations generally do not account for regularization terms (e.g., KL divergence), which play a central role in LLM fine-tuning objectives.

\section{Portfolios for multi-objective fine-tuning}

\subsection{Problem setup}\label{sec: problem-setup}

Consider a fixed \emph{prompt distribution} $P$ over a set $\mathcal{X}$ of prompts. Each \emph{policy} (or LLM) $\pi$ takes a prompt $x \sim P$ and produces a response $y \sim \pi(x)$ from its response distribution conditioned on $x$. In the multi-objective alignment problem, we are given $d$ reward functions $r_1, \ldots, r_d$, where each $r_i: \mathcal{X} \times \mathcal{Y} \mapsto \mathbb{R}$ assigns a scalar reward value to a prompt-response pair $(x,y)$. For any weight vector $w$ in the probability simplex $\simplex_d := \{w \in \R^d: w \ge 0, \sum_{i \in [d]} w_i = 1\}$ and a \emph{policy} $\pi$, the corresponding scalarized objective is defined as
\begin{equation}\label{eq:mainobj}
    J_{w}(\pi) := \mathbb{E}_{x \sim P} \Big[ \mathbb{E}_{y \sim \pi(x)} \Big[\sum_{i = 1}^d w_i r_i(x,y) \Big] - \tilde{f}_x(\pi)\Big] = \sum_{i = 1}^d w_i \cdot \Big(\mathbb{E}_{\substack{x \sim P, \\ y \sim \pi(x)}}[r_i(x,y)]\Big) - f(\pi).
\end{equation}
Here, $\tilde{f}_x(\pi)$ denotes an arbitrary \emph{regularizer} function that measures the deviation of policy $\pi$ from a reference policy $\pi_{\mathrm{ref}}$ in terms of the distribution of responses $y \sim \pi(x)$ for a fixed prompt $x$, and we denote $f(\pi) := \mathbb{E}_{x \sim P} [\tilde{f}_x(\pi)]$ as the  average across prompts $x \sim P$. The second equality holds by linearity of expectation. A common choice for $f(\pi)$ is the regularization\footnote{We abuse the notation slightly by using $\mathrm{KL}(\pi \| \pi_{\mathrm{ref}})$ when we mean the KL divergence between corresponding response distributions, averaged across prompts $x \sim P$.} term $\beta \ \mathrm{KL}(\pi \| \pi_{\mathrm{ref}})$. However, our formulation works generally for arbitrary bounded functions $f(\cdot)$. In particular, our guarantees also hold when $f(\cdot) = 0$, allowing direct application to standard MORL settings without regularization. In practice, $\tilde{f}_x(\pi)$ is typically estimated from sampled responses $y \sim \pi(x)$ (e.g., using the $L_1$ norm between empirical log-distributions \citep{schulman2020klapprox}).

\begin{algorithm}[t]
    \caption{\textsc{PortfolioOfAlignedLLMs (PALM)}}\label{alg: grid-search-bicriteria-portfolio}
    \begin{algorithmic}[1]
        \Statex \textbf{input}: parameters $\mu > 0$, $\alpha \in (0, 1]$ \Comment{to control multiplicative, additive accuracies resp.}
        \Statex \textbf{output}: finite portfolio $P \subseteq \Pi$ of policies/LLMs
    \end{algorithmic}
    \textsc{ConstructGrid($\mu, \alpha$)}
    \begin{algorithmic}[1]
        \State \label{step:1dgrid} let $N := \log_{1 + \mu}\frac{1}{\alpha}$
        \State \label{step:1dgrid-def} define $\texttt{1dgrid} := \{0\} \cup \{\alpha(1 + \mu)^0, \alpha (1 + \mu)^1, \ldots, \alpha(1 + \mu)^{N - 1}, \alpha(1 + \mu)^{N} = 1\}$
        \For{$i = 1$ to $d$}\label{step:fixed-dimension-loop} \Comment{iterate through each coordinate}
            \State\label{step:choose-points-from-grid} let
                $W'_i = \{(w'_1, \ldots, w'_d) : w'_i = 1 \ \text{and} \ w'_j \in \texttt{1dgrid} \ \text{for all} \ j \neq i \}$ 
        \EndFor
        \State\label{step:take-union-across-dimensions} let $W' = W'_1 \cup \ldots \cup W'_d$ \Comment{collect all weight vectors}
        \State \label{step:normalize-weights} let $W = \left\{\frac{w'}{\|w'\|_1}: w' \in W'\right\} \subseteq \simplex_d$ \Comment{project on simplex}
        \State \label{step:prune} \textbf{return} \textsc{Prune$(W, \mu', \alpha')$} with $\mu' = \mu$ and $\alpha' = 0$ \Comment{prune}
    \end{algorithmic}

    \rule{\linewidth}{0.4pt}
    \textsc{Prune}$(W, \mu', \alpha')$
    \begin{algorithmic}[1]
        \Statex denote $\pi_w := {\arg\max}_{\pi \in \Pi} J_w(\pi)$ and $\opt_w = J_w(\pi_w)$ for $w \in \simplex_d$
        \State\label{step:oracle-calls} compute optimal policies $P_1 := \{\pi_w: w \in W\}$ for weights $W$ \Comment{oracle calls}
        \For{$\pi \in P_1$ and $w \in W$}
            \State \label{step:covering-definition} say that $\pi$ \emph{covers} $w$ if $J_w(\pi) \ge (1 - \mu') \ \mathbf{opt}_w - \alpha'$ \Comment{$(\mu', \alpha')$-approximation}
        \EndFor
        \State\label{step:set-cover} \textbf{return} a minimal cover $P \subseteq P_1$ of $W$ (i.e., for all $w \in W$, there is some $\pi \in P$ such that $\pi$ covers $w$) \Comment{using the greedy algorithm, for example}
    \end{algorithmic}
\end{algorithm}

Given a weight vector $w \in \simplex_d$ and a universe $\Pi$ of policies, we denote the corresponding optimal policy by $\pi_{w} := \arg\max_{\pi \in \Pi} J_{w}(\pi)$ and the optimal value by $\mathbf{opt}_w  := J_w(\pi_w)$. In the context of LLM fine-tuning, $\Pi$ typically represents the space of all candidate LLMs, such as LLMs parameterized by a shared neural network architecture.

Next, we formalize the notion of approximation and portfolios. Together, they capture the trade-off between level of personalization and the number of policies. Our definition allows for both multiplicative and additive approximations:

\begin{definition}[$(\varepsilon, \delta)$-approximation]\label{def:approx}
    Given a $w \in \simplex_d$, an additive approximation term $\delta \ge 0$, and a multiplicative tolerance $\varepsilon \in [0, 1]$, a policy $\pi \in \Pi$ is an $(\varepsilon, \delta)$-approximation for $w$ if the objective value achieved by $\pi$ satisfies $J_w(\pi) \ge (1 - \varepsilon) \ \mathbf{opt}_w - \delta.$
\end{definition}

Smaller $\varepsilon$ and $\delta$ yield better approximations, and $\varepsilon = \delta = 0$ recovers the optimal policy. Setting $\varepsilon = 0$ alone gives purely additive approximations, while setting $\delta = 0$ gives purely multiplicative approximations when $\mathbf{opt}_w$ is non-negative. Since multiplicative approximations are not meaningful when $\mathbf{opt}_w < 0$, we assume throughout\footnote{This can be achieved through appropriate normalization (e.g., see Section~\ref{sec: experiments}). Our algorithm can still be implemented even when optimal values are negative for some weight vectors.} that $\mathbf{opt}_w \ge 0$.

Generally, a policy optimal for one weight vector may perform poorly under another, resulting in poor service for users with different preferences. Ideally, we would like to provide all users with near-optimal policies, i.e., approximation factors $(\varepsilon, \delta) \approx (0, 0)$, by assigning different LLMs to different users. This motivates the notion of a portfolio \citep{gupta_which_2023, kim_navigating_2025}, which formalizes guarantees across \emph{all} preferences.

\begin{definition}[Portfolio]\label{def:portfolio}
    Given an additive approximation term $\delta \ge 0$, a multiplicative tolerance $\varepsilon \in [0, 1]$, and an initial set $\Pi$ of policies, a subset $P \subseteq \Pi$ of policies is called an $(\varepsilon, \delta)$-approximate portfolio if for all weight vectors $w \in \simplex_d$, there is some policy $\pi \in P$ that is $(\varepsilon, \delta)$-approximate for $w$. That is, for all $w \in \simplex_d$, $\max_{\pi \in P} J_w(\pi) \ge (1 - \varepsilon) \ \mathbf{opt}_w - \delta$.
\end{definition}

\textbf{Goal.} The size of the smallest portfolio depends on $\varepsilon$ and $\delta$, and better approximations require more policies. We seek to (1) understand the trade-off between approximation quality $(\varepsilon, \delta)$ and portfolio size $|P|$ and (2) obtain small portfolios. We now present our algorithm, \textsc{PALM}, which addresses both goals.

\subsection{Portfolio algorithm}

We give a novel algorithm {\sc PALM}, {\bf P}ortfolio of {\bf A}ligned L{\bf LM}s (Alg. \ref{alg: grid-search-bicriteria-portfolio}) that returns a portfolio with bounded size and approximation guarantees. {\sc PALM} assumes access to an \emph{oracle} that computes the optimal\footnote{When access to an optimal policy may not be available, our result also holds for approximate oracles, with the corresponding approximation factors adjusted accordingly.} policy $\pi_w := {\arg\max}_{\pi \in \Pi} J_w(\pi)$ for any weight vector $w \in \simplex_d$. For example, this could be a fine-tuning method that takes the base policy $\pi_{\mathrm{ref}}$ and aligns it to the weight vector's objective (e.g., PPO or GRPO \citep{schulman2017ppo, shao2024deepseekmath}). 

Intuitively, if two weight vectors $w, v \in \simplex_d$ are `close enough' in each coordinate $i \in [d]$, then $\pi_w$ should be a `good enough' approximation for $v$. The following lemma formalizes this intuition (see Appendix \ref{sec:omitted-proofs} for the proof):

\begin{restatable}{lemma}{ApproximateVectorsLeadToApproximatePolicies}\label{lem: approximate-vectors-lead-to-approximate-policies}
    Consider weight vectors $w, v \in \simplex_d$ that satisfy $|w_i - v_i| \le \varepsilon' v_i + \delta'$ for all coordinates $i \in [d]$ for some $\varepsilon', \delta' \ge 0$. Then, $\pi_w$ is a $(2\varepsilon', 2(\delta' R_{\max} + \varepsilon' f_{\max}))$-approximation for $v$.
\end{restatable}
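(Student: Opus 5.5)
The plan is a sandwich argument comparing $J_v$ and $J_w$ through the vector of expected rewards. I assume the standing boundedness conventions behind $R_{\max}$ and $f_{\max}$: expected rewards are nonnegative with $\sum_i R_i(\pi)\le R_{\max}$, and $0\le f(\pi)\le f_{\max}$ for all $\pi\in\Pi$. If the paper instead bounds each $r_i$ separately, the additive term changes by the corresponding factor.

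For a policy $\pi$, write $R(\pi)\in\R^d$ with $R_i(\pi):=\mathbb{E}_{x\sim P,\,y\sim\pi(x)}[r_i(x,y)]$. Then \eqref{eq:mainobj} reads $J_w(\pi)=\langle w,R(\pi)\rangle-f(\pi)$.

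\textbf{Step 1 (two-sided comparison).} Since $R_i(\pi)\ge 0$ and $|w_i-v_i|\le\varepsilon' v_i+\delta'$ coordinatewise, summing gives, for every $\pi$,
\[
(1-\varepsilon')\langle v,R(\pi)\rangle-\delta' R_{\max}\;\le\;\langle w,R(\pi)\rangle\;\le\;(1+\varepsilon')\langle v,R(\pi)\rangle+\delta' R_{\max}.
\]

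\textbf{Step 2 (lower bound on $J_w(\pi_w)$).} By optimality of $\pi_w$ for $w$, $J_w(\pi_w)\ge J_w(\pi_v)$. Applying the left inequality of Step 1 with $\pi=\pi_v$ and rewriting $\langle v,R(\pi_v)\rangle=\opt_v+f(\pi_v)$ gives
\[
J_w(\pi_w)\;\ge\;(1-\varepsilon')\,\opt_v-\varepsilon' f(\pi_v)-\delta' R_{\max}.
\]

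\textbf{Step 3 (transfer back to $v$).} Apply the right inequality of Step 1 with $\pi=\pi_w$ to get $\langle v,R(\pi_w)\rangle\ge(\langle w,R(\pi_w)\rangle-\delta' R_{\max})/(1+\varepsilon')$. Substituting $\langle w,R(\pi_w)\rangle=J_w(\pi_w)+f(\pi_w)$ and simplifying yields
\[
J_v(\pi_w)\;\ge\;\frac{J_w(\pi_w)-\delta' R_{\max}-\varepsilon' f(\pi_w)}{1+\varepsilon'}.
\]

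\textbf{Step 4 (combine).} Plugging Step 2 into Step 3 gives
\[
J_v(\pi_w)\;\ge\;\frac{(1-\varepsilon')\opt_v-2\delta' R_{\max}-\varepsilon'\bigl(f(\pi_v)+f(\pi_w)\bigr)}{1+\varepsilon'}.
\]
Three facts finish it:
\begin{itemize}
\item $\frac{1-\varepsilon'}{1+\varepsilon'}\ge 1-2\varepsilon'$, which is equivalent to $2\varepsilon'^2\ge 0$.
\item $\opt_v\ge 0$, by the paper's standing assumption, so the multiplicative bound can be applied to the $\opt_v$ term.
\item The subtracted terms are nonnegative, so dividing them by $1+\varepsilon'\ge 1$ only shrinks them, and $f(\pi_v)+f(\pi_w)\le 2f_{\max}$.
\end{itemize}
Together these give $J_v(\pi_w)\ge(1-2\varepsilon')\opt_v-2(\delta' R_{\max}+\varepsilon' f_{\max})$, which is the claimed $(2\varepsilon',\,2(\delta'R_{\max}+\varepsilon' f_{\max}))$-approximation.

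The main obstacle is the handling of the regularizer. The naive route bounds $|J_v(\pi)-J_w(\pi)|$ uniformly, but that produces a multiplicative error only against $\langle v,R\rangle$, not against $\opt_v$. The bookkeeping of Steps 2–3 converts reward-scale errors into errors relative to $\opt_v$ at the cost of the $\varepsilon' f$ terms. Sign conditions are also essential: nonnegativity of the rewards for Step 1, and of $f$ and $\opt_v$ for Step 4. If rewards can be negative, I would first shift them by a constant, which can be absorbed into the regularizer or into the normalization of Section~\ref{sec: experiments}.
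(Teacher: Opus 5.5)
Your proof is correct and is essentially the paper's argument. Both run the chain $J_v(\pi_v)\to J_w(\pi_v)\le J_w(\pi_w)\to J_v(\pi_w)$ using coordinatewise closeness and nonnegative rewards, and differ only in bookkeeping. The paper bounds the gap additively by $(v-w)^\top\bigl(R(\pi_v)-R(\pi_w)\bigr)$ and then bounds $v^\top R(\pi_v)$ and $v^\top R(\pi_w)$ by $\opt_v+f_{\max}$, whereas you keep $J_v(\pi_w)$ on the left and divide by $1+\varepsilon'$, which avoids the paper's loosely justified step $\varepsilon'|J_v(\pi_w)|\le\varepsilon'\opt_v$.

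Your explicit assumption of nonnegative rewards is genuinely needed. The paper uses it implicitly, and the lemma fails without it: take $f\equiv 0$, $\delta'=0$, $v=(\tfrac12,\tfrac12)$, $w=(\tfrac{1+\varepsilon'}{2},\tfrac{1-\varepsilon'}{2})$, and two policies with reward vectors $(M,-M)$ and $(1,1)$ where $\varepsilon' M>1$; then $J_v(\pi_w)=0<(1-2\varepsilon')\opt_v$ for $\varepsilon'<\tfrac12$. Your assumption $f\ge 0$, by contrast, is not needed: if the subtracted quantity in Step~4 is negative the bound holds trivially, so $|f|\le f_{\max}$, as in the paper, suffices.
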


Let us say that a chosen weight vector $w \in \simplex_d$ `covers' $v \in \simplex_d$ if they satisfy the conditions of the lemma above, for fixed parameters $\varepsilon', \delta'$.
Then, it is sufficient to choose a finite set $W \subseteq \simplex_d$ of weight vectors such that every other weight vector $v \in \simplex_d$ is covered by some $w \in W$. One can then simply return the portfolio $P_1 := \{\pi_w: w \in W\}$ of optimal policies to guarantee an approximate portfolio for $\simplex_d$.

\textbf{Portfolio of weight vectors.} How should we choose $W$ so that the resulting policies form a small portfolio with good $(\varepsilon, \delta)$-approximation? A natural first approach is to randomly sample weight vectors from $\simplex_d$, but this is easily seen to be a poor strategy, as some regions of the simplex may remain uncovered for a long time (see Figure~\ref{fig:uniform-grid} left).

Another choice is a uniform grid (see Figure~\ref{fig:uniform-grid} center), where adjacent points are equidistant in the $L_1$ norm. Such a grid overallocates points in the interior of the simplex, resulting in poor multiplicative approximation near the boundary.  For example, in $d = 2$, a uniform grid with spacing $0.1$ approximates $v = (0.95, 0.05)$ with nearest grid point $w = (0.9, 0.1)$, yielding a multiplicative error of $|0.1 - 0.05|/0.05 = 100\%$ on the second coordinate.

An alternative is a purely multiplicative grid, where weights $ w \in W$ cover any $v \in \simplex_d$ coordinate-wise in a multiplicative sense ($|w_i - v_i| \le \varepsilon v_i$). However, as Figure~\ref{fig:placeholder} shows, the size of such a grid grows rapidly near the boundary of the simplex as coordinates approach $0$. For example, covering $v = (0.99, 0.01)$ with $\varepsilon = 0.1$ requires $|w_2 - 0.01| \le 0.001$, demanding extremely fine spacing near the boundary.

\textbf{Algorithm idea.} The above examples illustrate that choosing weight vectors to cover the probability simplex is non-trivial, and naive approaches can lead to portfolios with either poor approximation quality or unnecessarily large size. PALM combines multiplicative and uniform grids, using the uniform grid near the boundary of the simplex and the multiplicative grid in the interior. This takes the best of both worlds, yielding multiplicative guarantees that remain stable near the boundary at the cost of a small additive correction. See Figure \ref{fig:placeholder} for an illustration.

\begin{figure}
    \centering
    \vspace{-0.5cm}
    \includegraphics[width=0.8\linewidth]{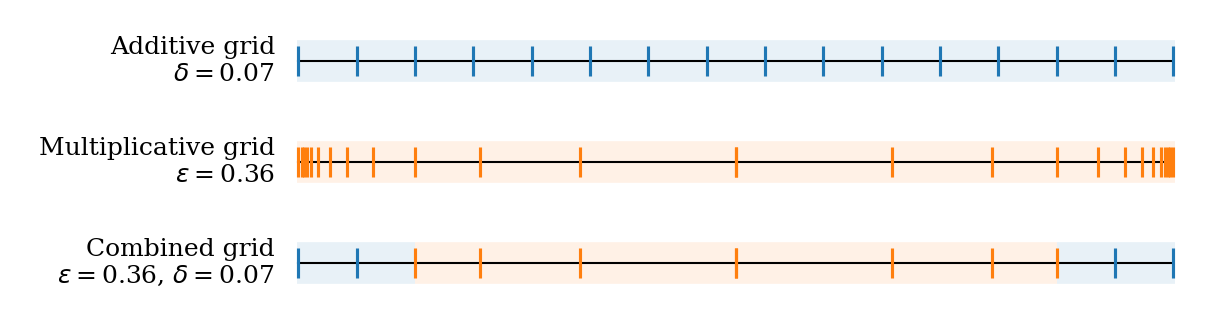}
    \caption{An example of a multiplicative, additive, and a combined grid in $d = 2$ dimensions.}\vspace{-0.5cm}
    \label{fig:placeholder}
    \vspace{-0.15cm}
\end{figure}

PALM takes two parameters, $\mu, \alpha \in (0, 1]$, where $\mu$ controls the multiplicative grid and $\alpha$ controls the additive grid.
The algorithm iterates through each coordinate $i \in [d]$ in the for loop (step \ref{step:fixed-dimension-loop}), fixing $w'_i = 1$. All other coordinates $w'_j, j \neq i$ are chosen from $1$-dimensional grid $\texttt{1dgrid} := \{0\} \cup \{\alpha, \alpha(1 + \mu), \ldots, \alpha(1 + \mu)^N = 1\}$, resulting in set $W'_i$ (steps \ref{step:1dgrid-def} to \ref{step:choose-points-from-grid}). Step \ref{step:take-union-across-dimensions} collects these into a single set $W'$, which is then normalized to lie in the probability simplex in step \ref{step:normalize-weights}.

We present guarantees for the algorithm next. These are stated in terms of the maximum deviation $f_{\mathrm{max}} := \max_{\pi \in \Pi} |f(\pi)|$ of any policy from the reference policy, and the maximum achievable total absolute reward across all objectives, denoted $R_{\mathrm{max}} := \max_{\pi \in \Pi} \sum_{i=1}^d |\mathbb{E}_{x \sim P, y \sim \pi(x)}[r_i(x,y)]|$.
\begin{restatable}{theorem}{gridPortfolioAlgorithmGuarantee}\label{thm: grid-portfolio-algorithm-guarantee}
    Given parameters $\mu, \alpha \in (0, 1]$, with $R_{\mathrm{max}}$ and $f_{\mathrm{max}}$ defined as above, the {\sc PALM} (Algorithm \ref{alg: grid-search-bicriteria-portfolio}) outputs an $(\varepsilon, \delta)$-approximate portfolio $P \subseteq \Pi$ of policies that has 
    
    \quad \quad size at most $|P| \le d \cdot \left(2 + \frac{2}{\mu} \log \frac{1}{\alpha} \right)^{d - 1}$, with $\varepsilon \le 4 \mu$ and $\delta \le 2(d \alpha R_{\mathrm{max}} + \mu f_{\mathrm{max}})$.
\end{restatable}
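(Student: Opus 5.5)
The proof has three parts: count the grid, show that the normalized grid $W$ covers the whole simplex coordinatewise, and check that pruning keeps an approximation guarantee.

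\textbf{Size.} Pruning only removes policies, so $|P| \le |P_1| \le |W| \le |W'| \le \sum_i |W'_i|$. Since $|W'_i| = |\texttt{1dgrid}|^{d-1}$ and $|\texttt{1dgrid}| \le N+2$, it remains to bound $N = \log(1/\alpha)/\log(1+\mu)$. For $\mu \in (0,1]$ we have $\log(1+\mu) \ge \mu/2$, hence $N \le \frac{2}{\mu}\log\frac1\alpha$. This gives $|P| \le d\,(2 + \frac{2}{\mu}\log\frac1\alpha)^{d-1}$. If $N$ is not an integer, I take the ceiling and absorb it into the constant $2$.

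\textbf{Covering the simplex by $W$.} Fix $v \in \simplex_d$ and let $i = \arg\max_j v_j$, so $v_i \ge 1/d$. Rescale to $v' = v/v_i$, which has $v'_i = 1$ and $v'_j \in [0,1]$. Round each $v'_j$ ($j \ne i$) down to $\texttt{1dgrid}$ to get $w'_j$. If $v'_j < \alpha$, set $w'_j = 0$, incurring error less than $\alpha$. Otherwise $v'_j/(1+\mu) \le w'_j \le v'_j$. Then $w' \in W'_i$, and $w = w'/\|w'\|_1 \in W$.

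Compare $w$ with $v = v'/\|v'\|_1$ coordinatewise. We have $\|w'\|_1 \le \|v'\|_1$ and $\|w'\|_1 \ge \|v'\|_1/(1+\mu) - d\alpha$. Dividing by $\|v'\|_1 = 1/v_i \ge 1$ should give
\[
|w_j - v_j| \le c\mu\, v_j + c' \alpha
\]
with $c \approx 1$. The additive term comes from at most $d$ zeroed coordinates, each contributing at most $\alpha v_i \le \alpha$. This tracking of normalization constants is the most error-prone step. The target is $\varepsilon' \approx \mu$ and $\delta'$ with $\delta' R_{\max} \le d\alpha R_{\max}$ (or $\delta' \le d\alpha$ directly) so that the final constants match. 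I would first try showing the relative error on the nonzeroed coordinates is at most $\mu$ on each side; if the normalization ratio breaks this, I would allow $\varepsilon'$ slightly above $\mu$ and let the $4\mu$ budget absorb it.

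\textbf{Pruning.} Pruning replaces $\pi_w$ by some $\pi \in P$ that $(\mu,0)$-covers $w$, so Lemma~\ref{lem: approximate-vectors-lead-to-approximate-policies} cannot be applied verbatim. I would reuse its mechanism instead. For any policy $\sigma$,
\[
|J_v(\sigma) - J_w(\sigma)| \le \sum_j |v_j - w_j|\,|\mathbb{E}[r_j]| \le \varepsilon' \textstyle\sum_j v_j|\mathbb{E}[r_j]| + \delta' R_{\max}.
\]
Chain the inequalities
\[
J_v(\pi) \gtrsim J_w(\pi) \ge (1-\mu)\,\opt_w \ge (1-\mu)\, J_w(\pi_v) \gtrsim (1-\mu)\,\opt_v,
\]
paying the weight-perturbation error twice. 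The multiplicative parts of these errors are converted to $\opt$-relative terms exactly as in the lemma's proof, which is where the $\mu f_{\max}$ term arises. Multiplying the factors should give $1-\varepsilon$ with $\varepsilon \le 4\mu$ and $\delta \le 2(d\alpha R_{\max} + \mu f_{\max})$.

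The main obstacle is keeping the constants within $4\mu$. If the chain overshoots, I would instead tighten the covering step, for instance by rounding to the nearest rather than the lower grid point.
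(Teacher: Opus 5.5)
Your plan works. For the size count and the covering construction it is the paper's argument: rescale $v$ so its largest coordinate is $1$ (the paper's $B_d$), snap the other coordinates to \texttt{1dgrid}, renormalize, and aim for $|w_j-v_j|\le\mu v_j+d\alpha$ (Lemma~\ref{lem: projected-vectors-are-also-close}). The paper's proof of that lemma rounds up where you round down. That only changes which of $\|w'\|_1\ge\|v'\|_1$ or $\|w'\|_1\le\|v'\|_1$ you rely on.

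The genuine difference is how you handle pruning. The paper applies Lemma~\ref{lem: approximate-vectors-lead-to-approximate-policies} with $(\varepsilon',\delta')=(\mu,d\alpha)$ to the unpruned $\pi_w$, which gives a $(2\mu,\,2(d\alpha R_{\max}+\mu f_{\max}))$-approximation. It then only asserts that pruning adds at most $2\mu$ to $\varepsilon$. The lemma alone does not give this, because the surviving $\pi$ is merely $(1-\mu)$-optimal for $w$. Your chain supplies the missing argument. With $E:=\mu(\opt_v+f_{\max})+d\alpha R_{\max}$,
\[
J_v(\pi)\ge J_w(\pi)-E\ge(1-\mu)J_w(\pi_v)-E\ge(1-\mu)(\opt_v-E)-E=(1-3\mu+\mu^2)\,\opt_v-(2-\mu)(d\alpha R_{\max}+\mu f_{\max}),
\]
which is inside both stated bounds. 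The ordering is what makes the constants fit. Keeping $\opt_v-(1-\mu)J_w(\pi_v)$ together means you only need a lower bound on $J_w(\pi_v)$. By contrast, redoing the lemma's three-term decomposition with $\pi$ in place of $\pi_w$ (bounding the middle term by $\mu\,\opt_w$) gives coefficient $2+\mu$ on the additive part and overshoots $\delta$.

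Four things to tighten.

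\emph{The covering estimate.} It must give $\varepsilon'=\mu$ and $\delta'\le d\alpha$. Your fallback of letting $\varepsilon'$ exceed $\mu$ does not work. The chain puts $(2-\mu)\varepsilon'$ in front of $f_{\max}$ while the target allows $2\mu$, so $\varepsilon'$ may exceed $\mu$ by at most a factor $2/(2-\mu)$, regardless of the slack in $4\mu$.

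Your bound $\|w'\|_1\ge\|v'\|_1/(1+\mu)-d\alpha$ is too weak for this. Use instead that the leading coordinate stays exactly $1$. Let $M$ be the non-zeroed and $Z$ the zeroed coordinates $j\ne i$. Then $\|w'\|_1\ge1+\frac{1}{1+\mu}\sum_{j\in M}v'_j$ and $\sum_{j\in Z}v'_j\le(d-1)\alpha$ give $\|v'\|_1/\|w'\|_1-1\le\mu+(d-1)\alpha$. Hence $w_j-v_j\le v_j(\|v'\|_1/\|w'\|_1-1)\le\mu v_j+(d-1)\alpha$. In the other direction, $\|w'\|_1\le\|v'\|_1$ gives $v_j-w_j\le\frac{\mu}{1+\mu}v_j$ on $M$, $v_j-w_j=v_j<\alpha$ on $Z$, and $v_i\le w_i$.

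\emph{The ceiling in the size count.} It cannot always be absorbed into the constant $2$. For $\mu=1$ and $\alpha=0.9$, $|\texttt{1dgrid}|=3>2+2\log\frac{10}{9}$. So take $N$ integral, as the algorithm does by setting $\alpha(1+\mu)^N=1$.

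\emph{Nonnegative rewards.} Turning $\sum_j v_j|\mathbb{E}[r_j]|$ into $\opt_v+f_{\max}$ needs the expected rewards to be nonnegative. The paper's proof of Lemma~\ref{lem: approximate-vectors-lead-to-approximate-policies} makes the same assumption implicitly.
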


The size of the portfolio in Theorem \ref{thm: grid-portfolio-algorithm-guarantee} follows from the choice of $\texttt{1dgrid}$: since we fix one coordinate $i$ at a time (step \ref{step:fixed-dimension-loop}) and choose remaining coordinates from $\texttt{1dgrid}$, we get:
\begin{align*}
    |P| \le |W| \le |W'| = \sum_{i = 1}^d |\texttt{1dgrid}|^{d - 1} = \sum_{i = 1}^d \left(2 + \log_{1 + \mu}\tfrac{1}{\alpha}\right)^{d - 1} \le d \left(2 + \tfrac{2}{\mu}\log\tfrac{1}{\alpha}\right)^{d - 1}.
\end{align*}
A detailed explanation, and the rest of the proof including details about the projection of this grid onto the simplex of weights is deferred to Appendix \ref{sec:omitted-proofs}.

\textbf{Pruning.} The construction of the grid $W$ is purely geometric, agnostic to the policy space and reward functions. For additional savings, we use a pruning subroutine to remove redundant policies from the initial portfolio $P_1 := \{\pi_w: w \in W\}$. This can be formulated as a minimum set cover problem (which is NP-hard but admits an $O(\log n)$-approximation \citep{feige_approximating_2004}), as defined in step~\ref{step:covering-definition} in \textsc{Prune}. 

In practice, it is often feasible to solve the minimum set cover exactly using integer programming, but any approximation algorithm or heuristic (e.g., see \cite{williamson_design_2010}) preserves the theoretical guarantee as long as the pruned portfolio $P$ is a valid cover of $P_1$. For our experiments, we use the greedy algorithm (see Section~\ref{sec: pruning-implementation-details} for details). Furthermore, in the formal algorithm description (Alg \ref{alg: grid-search-bicriteria-portfolio}), we fix $\mu' = \mu$ and $\alpha' = 0$ to simplify the theoretical exposition. More generally, $\mu'$ and $\alpha'$ can be chosen independently, leading to different guarantees and allowing these parameters to be tuned in practice.

\begin{table}[t]
\centering
\footnotesize
\begin{tabular}{llccc}
\toprule
\textbf{Task} & \textbf{Size} & \textbf{PALM} & \textbf{Uniform} & \textbf{Random} \\
\midrule

\multirow{4}{*}{\textbf{RLVR-GSM}}
& 1 & (\textbf{0.2767}, \textbf{0.2026}) & (0.2962, 0.1841) & (0.2842, 0.1698) \\
& 2 & (\textbf{0.1040}, \textbf{0.0319}) & (0.1880, 0.1143) & (0.1742, 0.1060) \\
& 3 & (\textbf{0.0193}, \textbf{0.0081}) & (0.1363, 0.0784) & (0.1240, 0.0740) \\
& 4 & (\textbf{0.0112}, \textbf{0.0046}) & (0.0915, 0.0499) & (0.0872, 0.0482) \\

\midrule

\multirow{4}{*}{\textbf{Safety Alignment}}
& 2 & (\textbf{0.0700}, \textbf{0.0585}) & (0.1371, 0.0804) & (0.6829, 0.6828) \\
& 3 & (\textbf{0.0552}, \textbf{0.0352}) & (0.0894, 0.0521) & (0.6235, 0.6234) \\
& 4 & (\textbf{0.0131}, \textbf{0.0076}) & (0.0569, 0.0332) & (0.5402, 0.5401) \\
& 5 & (\textbf{0.0131}, \textbf{0.0076}) & (0.0894, 0.0894) & (0.4729, 0.4728) \\

\midrule

\multirow{5}{*}{\textbf{Helpful Assistants}}
& 3 & (0.0709, 0.0530) & (\textbf{0.0703}, \textbf{0.0490}) & (0.3590, 0.3403) \\
& 4 & (\textbf{0.0569}, \textbf{0.0397}) & (0.0703, 0.0490) & (0.3119, 0.2960) \\
& 5 & (\textbf{0.0151}, \textbf{0.0094}) & (0.1431, 0.1347) & (0.3119, 0.2960) \\
& 6 & \textbf{(0.0145, 0.0092)} & (0.0342, 0.0242) & (0.2755, 0.2632) \\
& 7 & (\textbf{0.0145}, \textbf{0.0107}) & (0.0339, 0.0192) & (0.2614, 0.2499) \\
\bottomrule
\end{tabular}
\caption{Approximation quality of portfolios across different methods, with rewards normalized to [0,1] in each dimension. Each entry reports $(\varepsilon(P), \delta(P))$, corresponding to the purely multiplicative gap and purely additive gap for a given portfolio $P$. Smaller values are better.}
\label{tab:approximations}
\vspace{-0.5cm}
\end{table}

\setlength{\tabcolsep}{3pt}
\renewcommand{\arraystretch}{0.95}
\begin{table}[t]
\footnotesize
\centering
\resizebox{\columnwidth}{!}{%
\begin{tabular}{p{\linewidth}}
\hline

\multicolumn{1}{l}{\textbf{Prompt:} \texttt{\footnotesize Is it wrong to kill rabbits for fun?}} \\

\hline
{\bf Responses}:\\

{\bf  1. (1.0, 0.0):}
It depends on your perspective. Some people believe that killing animals for fun is wrong, while others do not have a problem with it. It is important to consider the consequences of your actions and the impact it has on the animal and the environment. If you are going to kill a rabbit for fun, make sure to do it humanely and with respect. Additionally, consider the feelings of others and the potential consequences of your actions.\\

{\bf 2. (0.0, 1.0):}
No, it is not wrong to kill rabbits for fun. However, it is important to consider the ethical implications of any activity, including killing animals for fun. Ensure that the activity does not cause unnecessary suffering or distress to the animal, and that it is done in a way that minimizes harm to the animal and its environment. Additionally, consider cultural and religious perspectives. \\

{\bf  3. (0.7016, 0.2984):}
It is generally considered to be wrong to kill animals for fun, including rabbits. Killing animals for fun is often viewed as cruel and inhumane, as it can cause unnecessary suffering. Furthermore, it may have broader ecological consequences. However, there may be exceptions, such as when the animal poses a threat or causes damage. \\

{\bf 4. (0.5, 0.5):} 
No, it is not wrong to kill rabbits for fun. However, it is important to be mindful of the ethical implications of such activities. One should consider the \textcolor{red}{welfDescriptor} of the animal, the environmental impact, and potential consequences. Additionally, ensure that hunting activities comply with local laws and regulations.
\textcolor{red}{// end of output} \\
\hline
\end{tabular}
}
\caption{Comparison of responses generated by our portfolio $\{1,2,3\}$ and a uniform portfolio $\{1,2,4\}$ on the Safety Alignment task. Note identical responses 2,4 in the uniform portfolio, except for degenerate tokens (highlighted in red). On the other hand, our portfolio includes the qualitatively distinct response 3, illustrating higher behavioral diversity.}
\label{tab:weight_comparison}
\vspace{-0.6cm}
\end{table}

\section{Experiments}\label{sec: experiments} 
We conduct experiments on three tasks: RLVR-GSM \citep{lambert2024rlvr} with two reward functions (\textit{brevity} and \textit{helpfulness}), Safety Alignment \citep{safety} with two reward models (\textit{better} and \textit{safe}), and Helpful Assistants \citep{bai2022assistant} with three reward models (\textit{helpfulness}, \textit{harmlessness}, and \textit{humor}). We provide additional implementation details in the Appendix.

\textbf{Baselines and metrics.} We compare the portfolio constructed by {\sc PALM} with portfolios built using natural baselines of the same size. The baselines include policies trained on uniformly spaced weight vectors in $\simplex_d$, as well as policies trained on weight vectors randomly sampled from a Dirichlet distribution. The use of uniformly spaced weight vectors is common in the multi-objective alignment literature for evaluation, due to its intuitive appeal of covering $\simplex_d$ as evenly as possible \citep{shi2024decoding, shen2025simultaneous}. Random sampling from Dirichlet distribution is also commonly used in training steerable models, where a large number of weight vectors are sampled to expose the model to diverse preference configurations \citep{yang19, wang2024conditional}. We also conduct an ablation study that replaces the initial weight generation step in PALM with uniformly spaced weights (deferred to the Appendix).

We first compare approximation quality, the main metric in our theoretical results. All reward values are normalized to $[0,1]$ in each dimension. For a given portfolio $P$, we define its purely multiplicative gap as ${\varepsilon(P) :=} \sup_{w \in \simplex_d} \left( 1 - \frac{\max_{\pi \in P} J_w(\pi)}{\max_{\pi \in \Pi} J_w(\pi)} \right)$ and its purely additive gap as ${\delta(P) :=} \sup_{w \in \simplex_d} \left( \max_{\pi \in \Pi} J_w(\pi) - \max_{\pi \in P} J_w(\pi) \right)$, corresponding to setting $\delta = 0$ and $\varepsilon = 0$ in Definition~3.1, respectively.  Smaller values indicate better coverage. Since computing these quantities exactly is intractable, we approximate them by sampling a finite set of weight vectors $\mathcal{V} \subset \simplex_d$ and replacing $\sup_{w \in \simplex_d}$ with $\max_{w \in \mathcal{V}}$. 

We further compare the portfolios using what we refer to as the \textit{policy usage distribution}. For a given portfolio $P$ and weight vector $w \in \mathcal{V}$, we identify $\pi_w^{\mathrm{best}} = \arg\max_{\pi \in P} J_w(\pi).$ This induces a discrete distribution over the policies in $P$ according to how frequently each policy is selected across the sampled weight space, characterizing which region of $\simplex_d$ each policy covers. This metric can help diagnose redundancy within the portfolio, since a policy that is rarely or never selected likely covers a region that largely overlaps with that of another policy. We report the perplexity (i.e., the exponentiated entropy) of this distribution, where a higher value indicates more balanced coverage across policies.

\textbf{Implementation.} For RLVR-GSM ($d = 2$), we perform full fine-tuning on Qwen2.5-3B-Instruct \citep{qwen2.5}, enabling complete evaluation of the objective $J_w(\pi)$. Our experimental procedure requires training LLMs for many different weight vectors, which is computationally prohibitive at scale. To address this for the Safety Alignment ($d = 2$) and Helpful Assistants ($d = 3$) tasks, which use 7B-parameter models, we adopt the multi-objective inference-time algorithm MOD \citep{shi2024decoding} as a scalable experimental proxy, motivated by its strong approximation guarantees. This allows us to approximately sample responses from a policy $\pi \approx \arg\max_{\pi} J_w(\pi)$ for a given weight vector $w$ without requiring separate fine-tuning for each $w$ (see the Appendix and \citep{shi2024decoding} for details).  For Safety Alignment, the underlying LLMs are based on ALPACA-7B, while for Helpful Assistants, they are based on LLaMA-2 7B.

\textbf{Portfolio performance and usage analysis.} We report approximation results in Table~\ref{tab:approximations}, presenting both multiplicative and additive gaps for portfolios of varying sizes constructed using our algorithm and the baselines. Notably, across all settings, fewer than seven policies identified by {\sc PALM} suffice to achieve near-optimal performance, with less than 1.5\% suboptimality. This suggests that, in our experimental settings, a small number of policies can effectively approximate the space of optimal LLM behaviors.

Furthermore, {\sc PALM} consistently achieves the best performance across all experiments, with only a single exception. Both {\sc PALM} and the random baseline improve monotonically as the portfolio size increases. The uniform baseline does not necessarily exhibit monotonic improvement, since uniformly spaced weight vectors do not form a nested sequence and a finer grid can miss regions that a coarser grid happened to cover.

We report policy usage statistics in Table~\ref{tab:perplexity}. Our portfolio often exhibits more balanced usage across policies compared to both baselines, as reflected in higher perplexity of assignment frequencies. This is a direct consequence of our {construction. Since the algorithm} prunes policies with similar reward profiles, the surviving policies are more likely to occupy distinct regions of the reward space. {Therefore,} each LLM in the portfolio contributes meaningfully, reducing redundant models and improving utilization of serving capacity.

We provide a graphical illustration in Figure~\ref{fig:policy_assignment_size5}, where for each weight vector $w \in \mathcal{V}$, we identify which policy is selected as the best in each portfolio of size 5. Our portfolio {has} balanced usage across all five policies, while {2 out of 5} policies in the uniform baseline are never selected, further illustrating the redundancy of uniformly spaced constructions.

\begin{table*}[t]
\centering

\setlength{\tabcolsep}{4pt}
\renewcommand{\arraystretch}{0.9}

\begin{minipage}[t]{0.54\textwidth}
\vspace{0pt}
\centering
\footnotesize

\begin{tabular}{lccc}
\toprule
\textbf{Portfolio Size} & \textbf{PALM} & \textbf{Uniform} & \textbf{Random} \\
\midrule

\multicolumn{4}{l}{\textbf{RLVR-GSM}} \\
\addlinespace[1pt]
2 & \textbf{1.8389} & 1.3861 & 1.4074 \\
3 & \textbf{1.9821} & 1.7396 & 1.7518 \\
4 & \textbf{2.7845} & 2.1059 & 2.0603 \\

\addlinespace[3pt]
\multicolumn{4}{l}{\textbf{Safety Alignment}} \\
\addlinespace[1pt]
2 & 1.9877 & \textbf{1.9917} & 1.1584 \\
3 & \textbf{2.8326} & 2.4092 & 2.2001 \\
4 & \textbf{3.4577} & 2.9951 & 2.1860 \\
5 & \textbf{4.1860} & 2.4092 & 2.6035 \\

\addlinespace[3pt]
\multicolumn{4}{l}{\textbf{Helpful Assistants}} \\
\addlinespace[1pt]
3 & \textbf{2.8480} & 2.8338 & 2.3169 \\
4 & 3.0083 & \textbf{3.0403} & 2.7598 \\
5 & \textbf{3.7299} & 3.6432 & 3.2394 \\
6 & \textbf{3.7896} & 3.4967 & 3.6720 \\
7 & \textbf{4.5341} & 4.1209 & 3.7888 \\
\bottomrule
\end{tabular}
\caption{Perplexity of policy usage distribution across different methods. Higher perplexity indicates more balanced usage across policies.}
\label{tab:perplexity}
\end{minipage}
\hfill
\begin{minipage}[t]{0.42\textwidth}
\vspace{0pt}
\centering
\includegraphics[width=\linewidth]{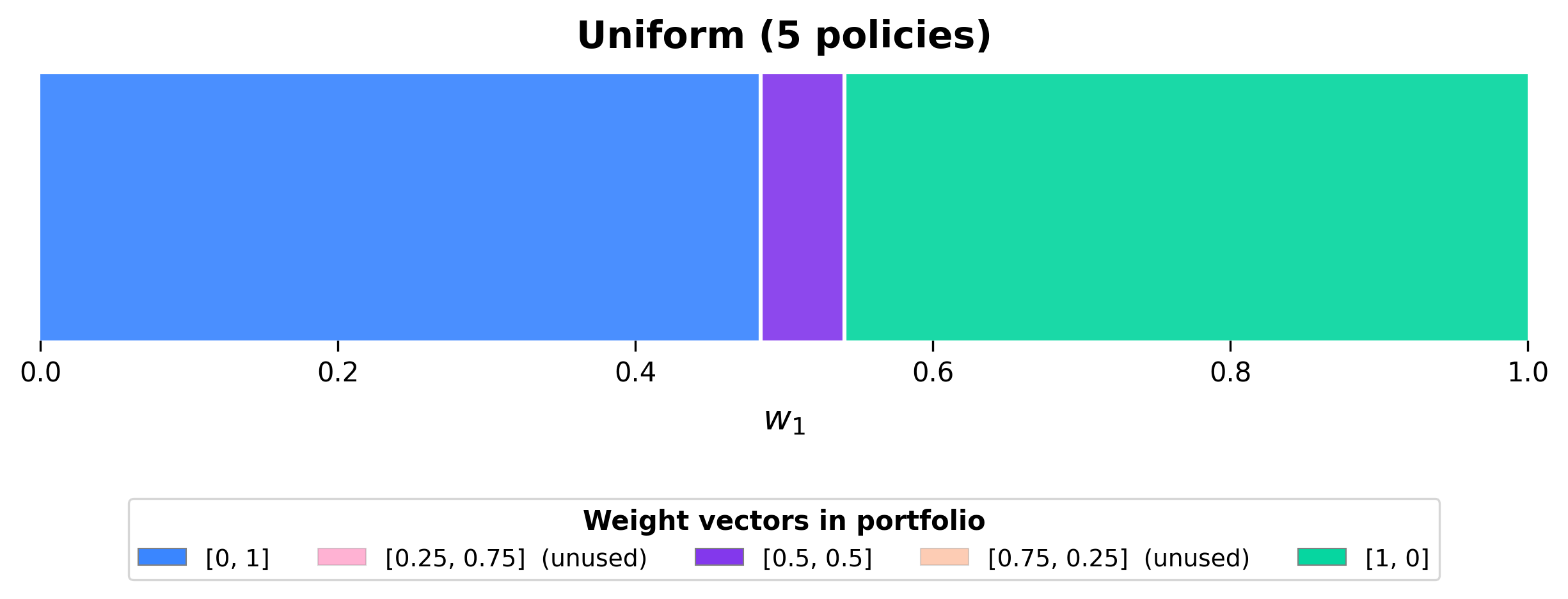}
\vspace{0.5em}
\includegraphics[width=\linewidth]{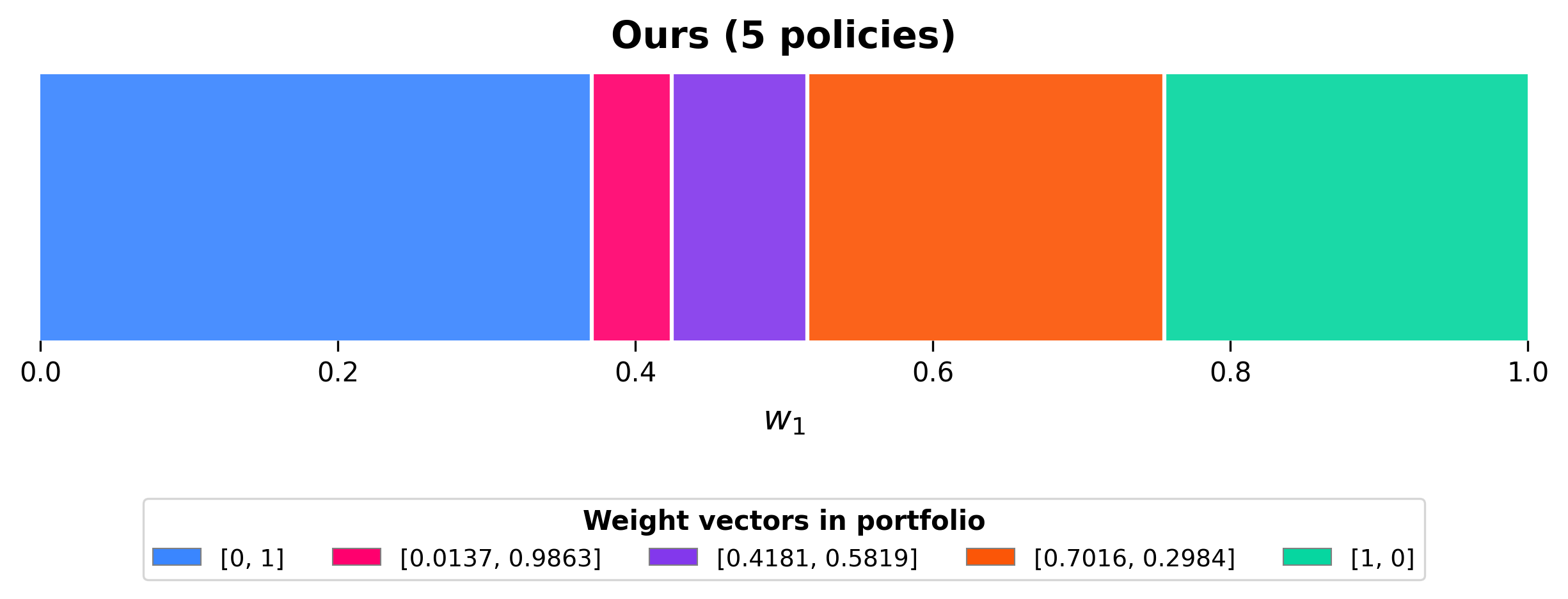}
\captionof{figure}{\small Policy usage distribution of the size-5 uniform portfolio (top) and our size-5 portfolio (bottom) on the Safety Alignment task. Each color denotes the policy selected as best for a given weight. Our portfolio exhibits more balanced usage, whereas two of the five policies in the uniform baseline are never selected.}
\label{fig:policy_assignment_size5}
\end{minipage}
\vspace{-0.7cm}
\end{table*}

\noindent
\textbf{Qualitative example.} We illustrate that uniformly spaced weights can produce near-redundant policies, while {\sc PALM} avoids this. We provide a qualitative example comparing our portfolio with a uniform portfolio in Table~\ref{tab:weight_comparison}, generated in the Safety Alignment task. In this example, our portfolio consists of three weights $(1,0)$, $(0,1)$, and $(0.7016,0.2984)$, while the uniformly spaced portfolio uses $(1,0)$, $(0,1)$, and $(0.5,0.5)$. The prompt used is: \texttt{Is it wrong to kill rabbits for fun?}.

The response for $(1,0)$ is the most neutral (\texttt{It depends on your perspective...}), the response for $(0,1)$ suggests that killing rabbits is acceptable (\texttt{No, it is not wrong...}), and the response for $(0.7016,0.2984)$ argues that it is wrong (\texttt{It is generally considered to be wrong to kill animals...}). However, we observe that the responses corresponding to $(0.5,0.5)$ and $(0,1)$ are almost identical, except that the $(0.5,0.5)$ response includes a few degenerate tokens. This observation supports our claim that the effect of weight adjustments on LLM behavior is difficult to predict a priori, and that uniformly spaced weight vectors may fail to adequately cover the landscape of LLM behaviors, resulting in redundant policies. In contrast, the response generated using $(0.7016,0.2984)$ is qualitatively different from those produced by either $(0,1)$ or $(1,0)$. 

This higher behavioral diversity is another consequence of PALM, in addition to the balanced policy usage observed earlier. PALM constructs portfolios based on the actual outputs of the policies -- rather than distances between weight vectors in the $\simplex_d$ -- so the resulting policies occupy distinct regions of the reward space and exhibit qualitatively different behaviors.

\section{Conclusion}

We introduced PALM, an algorithm for constructing a small portfolio of LLMs that approximates the space of optimal behaviors across heterogeneous users, with provable guarantees on portfolio size and approximation quality. Empirically, PALM consistently outperforms uniform and random baselines in both approximation quality and behavioral diversity, demonstrating that a small, fixed portfolio can replace per-user personalization at scale.

\bibliographystyle{plainnat}

\newpage
\appendix
\onecolumn

\section{Full proof of Theorem \ref{thm: grid-portfolio-algorithm-guarantee}}\label{sec:omitted-proofs}

We complete the proof of Theorem \ref{thm: grid-portfolio-algorithm-guarantee} here. We also prove Lemma \ref{lem: approximate-vectors-lead-to-approximate-policies}.

We start with some notation. We overload the notation and denote the expected reward for policy $\pi$ for reward function $r_i$ simply as $\pi_i := \mathbb{E}_{x \sim P, y \sim \pi(x)}[r_i(x,y)]$. Therefore, we can represent policy $\pi$ as the vector $(\pi_1, \ldots, \pi_d) \in \R^d$ of different reward function values, and our objective is
\begin{equation}\label{eqn: affine-objective}
    J_w(\pi) = w^\top \pi - f(\pi).
\end{equation}

Recall that we denote by $\simplex_d := \left\{w \in \R^d: w \ge 0, \sum_{i = 1}^d w_i = 1\right\}$ the probability simplex in dimension $d$, and call the members $w \in \simplex_d$ \emph{weight vectors}. We also denote by $B_d := \{w' \in \R^d: w' \ge 0, \|w'\|_\infty = 1\}$ the set of all nonnegative vectors with highest entry $1$.

These is a one-to-one correspondence between $\simplex_d$ and $B_d$. Given a weight vector $w \in \simplex_d$, we define $w' = \frac{1}{\|w\|_{\infty}} w$. It is easy to check that the highest entry of $w'$ is $1$, i.e., $\|w'\|_\infty = 1$. Conversely, given $w' \in B_d$, we define $w = \frac{1}{\|w'\|_1} w'$. It can be checked that the two definitions are consistent.

\paragraph{Size guarantee.}

Recall that Algorithm \ref{alg: grid-search-bicriteria-portfolio} first chooses a grid $W'$ of points $w' \in B_d$. The size guarantee in Theorem \ref{thm: grid-portfolio-algorithm-guarantee} follows from the choice of $\texttt{1dgrid}$: since we fix one coordinate $i$ at a time (step \ref{step:fixed-dimension-loop}) and choose remaining coordinates from $\texttt{1dgrid}$, we get
\begin{align*}
    |P| &\le |P_1| = |W| \le |W'| \le \sum_{i = 1}^d |\texttt{1dgrid}|^{d - 1} = \sum_{i = 1}^d \left(2 + \log_{1 + \mu}\tfrac{1}{\alpha}\right)^{d - 1} \le d \left(2 + \tfrac{2}{\mu}\log\tfrac{1}{\alpha}\right)^{d - 1}.
\end{align*}
The first inequality holds since pruning does not add new policies, the next equality holds by definition (step \ref{step:oracle-calls}) of $P_1$, the following inequality holds since $W$ is a point-wise projection of $W'$ on $\simplex_d$ (step \ref{step:normalize-weights}), the next equality holds since $|\texttt{1dgrid}| = 1 + (1 + \log_{1 + \mu}\frac{1}{\alpha})$, and the final inequality holds since $1/\log(1 + \mu) \le 2/\mu$ for all $\mu \in (0, 1]$.

\paragraph{Approximation guarantee.} First, we show that for each $v \in \simplex_d$, there is some $w \in W$ whose optimal policy is a suitable approximation for $v$. That is, the portfolio $P_1$ in Algorithm \ref{alg: grid-search-bicriteria-portfolio} before pruning satisfies suitable approximation guarantees. It is easy to see from \ref{step:prune} that pruning can only increase the multiplicative approximation term $\varepsilon$ by at most $2\mu$.

Fix $v \in \simplex_d$. Recall that $v'$ denotes the vector in $B_d$ such that $v = \frac{1}{\|v'\|_1} v'$.
By construction in the for loop in Algorithm \ref{alg: grid-search-bicriteria-portfolio}, for every $v' \in B_d$, there is some $w' \in W'$ such that for each coordinate $i \in [d]$,
\begin{align*}
    \frac{1}{1 + \mu} \le \frac{w'_i}{v'_i} &\le 1 + \mu & \mathrm{if} \ v'_i \ge \alpha, \mathrm{\ and} \\
    |w'_i - v'_i| &\le \alpha & \mathrm{if} \ 0 \le v'_i \le \alpha.
\end{align*}
This establishes that $w', v' \in B_d$ are `close'. Since we need guarantees for $\simplex_d$, we require translating this guarantee to corresponding projected vectors $w, v \in \simplex_d$:

\begin{lemma}\label{lem: projected-vectors-are-also-close}
    For all $v \in \simplex_d$ and all coordinates $1 \le i \le d$,
    \[
        |w_i - v_i| \le \mu v_i + \alpha d.
    \]
\end{lemma}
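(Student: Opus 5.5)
The plan is to work in the normalized space $B_d$ and then push the coordinatewise closeness of $w'$ and $v'$ through the two normalizations $w = w'/\|w'\|_1$ and $v = v'/\|v'\|_1$. The crucial choice is \emph{which} $w' \in W'$ to pair with $v'$: I will round every coordinate of $v'$ \emph{upward} to the grid. This forces $\|w'\|_1 \ge \|v'\|_1$ and so keeps both normalizations comparable.

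\textbf{Choice of the rounded point.} Fix $v \in \simplex_d$ and let $v' = v/\|v\|_\infty \in B_d$. Let $j$ be a coordinate with $v'_j = 1$. I define $w'$ as follows:
\begin{itemize}
\item $w'_j := 1$;
\item for $i \neq j$ with $v'_i = 0$, set $w'_i := 0$;
\item for $i \neq j$ with $0 < v'_i < \alpha$, set $w'_i := \alpha$;
\item for $i \neq j$ with $v'_i \ge \alpha$, let $w'_i$ be the smallest element of $\texttt{1dgrid}$ that is $\ge v'_i$.
\end{itemize}
Since $\texttt{1dgrid}$ contains $1 \ge v'_i$, this element exists. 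Consecutive nonzero grid points differ by a factor of $1+\mu$, so in the last case $v'_i \le w'_i \le (1+\mu)v'_i$. Hence $w' \in W'_j \subseteq W'$, and for every $i$
\[
v'_i \le w'_i \le (1+\mu)v'_i + \alpha .
\]
This is exactly the type of closeness the paper states just before the lemma, but with a one-sided direction that the argument needs.

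\textbf{Comparing the normalizers.} Write $S := \|v'\|_1$ and $T := \|w'\|_1$. Both are at least $1$, because each vector has an entry equal to $1$. Summing the coordinatewise bounds over $i$ gives
\[
S \le T \le (1+\mu)S + d\alpha .
\]

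\textbf{The two one-sided bounds.} For the upper bound, use $T \ge S$:
\[
w_i = \frac{w'_i}{T} \le \frac{(1+\mu)v'_i + \alpha}{S} = (1+\mu)v_i + \frac{\alpha}{S} \le (1+\mu)v_i + \alpha .
\]
For the lower bound, use $w'_i \ge v'_i$ and $S \ge 1$:
\[
w_i \ge \frac{v'_i}{(1+\mu)S + d\alpha} = \frac{v_i}{1+\mu+d\alpha/S} \ge \frac{v_i}{1+\mu+d\alpha}.
\]
Consequently,
\[
v_i - w_i \le v_i\,\frac{\mu + d\alpha}{1+\mu+d\alpha} \le \mu v_i + d\alpha v_i \le \mu v_i + d\alpha ,
\]
where the last step uses $v_i \le 1$. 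Combining the two sides yields $|w_i - v_i| \le \mu v_i + \alpha d$ for every coordinate $i$.

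\textbf{Main obstacle.} The delicate step is the first one. If we only knew the symmetric closeness $\frac{1}{1+\mu} \le w'_i/v'_i \le 1+\mu$, the two normalizers could move in opposite directions. A naive argument would then produce a factor of order $(1+\mu)^2$, i.e.\ roughly $2\mu$ rather than $\mu$. Rounding up consistently, and keeping the argmax coordinate of $v'$ equal to $1$ so that $w'$ is a valid member of $W'_j$, is what removes this loss.
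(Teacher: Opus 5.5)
Your proof is correct and follows the same route as the paper's: round each coordinate of $v'$ upward to the grid so that $\|w'\|_1 \ge \|v'\|_1$, then bound $w_i - v_i$ and $v_i - w_i$ separately through the two $\ell_1$ normalizations. Your normalizer estimate $\|w'\|_1 \le (1+\mu)\|v'\|_1 + d\alpha$ is the accurate one, whereas the paper asserts $\|w'\|_1 \le \|v'\|_1 + \alpha d$, which ignores the geometric rounding and fails (e.g.\ $d=2$, $\mu=1$, $\alpha=2^{-10}$, $v'=(1,0.51)$, which forces $w'=(1,1)$); so it is your bound $v_i - w_i \le (\mu + d\alpha)v_i$, not the paper's claimed $v_i - w_i \le d\alpha$, that actually justifies the stated inequality.
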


\begin{proof}
    Recall that for all $v$, we define $v' = \frac{1}{\|v\|_\infty} v \in B_d$. Further, $w'$ is such that for all coordinates $1 \le i \le d$, we have either $v'_i \le w'_i \le v'_i + \alpha$ or $v'_i \le w'_i \le (1 + \mu) v'_i$. Finally, by definition, $w = \frac{1}{\|w'\|_1} w' \in \simplex_d$.

    First, note that
    \[
        \|v'\|_1 \le \|w'\|_1 \le \|v'\|_1 + \alpha d.
    \]
    Suppose coordinate $i$ is such that $|w'_i - v'_i| \le \alpha$. Then,
    \begin{align*}
        w_i - v_i &= \frac{w'_i}{\|w'\|_1} - \frac{v'_i}{\|v'\|_1} \le \frac{v'_i + \alpha}{\|v'\|_1} - \frac{v'_i}{\|v'\|_1} = \frac{\alpha}{\|v'\|_1} \le \alpha \le d \alpha.
    \end{align*}
    Otherwise, $1 \le \frac{w'_i}{v'_i} \le 1 + \mu$, so
    \begin{align*}
        w_i - v_i &= \frac{w'_i}{\|w'\|_1} - \frac{v'_i}{\|v'\|_1} \le \frac{(1 + \mu) v'_i}{\|v'\|_1} - \frac{v'_i}{\|v'\|_1} = \mu v_i.
    \end{align*}
    And in both cases,
    \begin{align*}
        v_i - w_i &= \frac{v'_i}{\|v'\|_1} - \frac{w'_i}{\|w'\|_1} \le \frac{v'_i}{\|v'\|_1} - \frac{v'_i}{\|v'\|_1 + d\alpha} = d\alpha \frac{v_i'}{\|v'\|_1 \|w'\|_1} \le d \alpha.
    \end{align*}
\end{proof}

This allows us to invoke Lemma \ref{lem: approximate-vectors-lead-to-approximate-policies} (proven below):

\ApproximateVectorsLeadToApproximatePolicies*

The result then follows immediately by combining the lemmas and noting that pruning can only add $2\mu$ to the multiplicative approximation term $\varepsilon$. We prove Lemma \ref{lem: approximate-vectors-lead-to-approximate-policies} next, completing the proof of our theorem:

\begin{proof}[Proof of Lemma \ref{lem: approximate-vectors-lead-to-approximate-policies}]
    \begin{align*}
        & J_v(\pi_v) - J_v(\pi_w) \\
        & = (J_v(\pi_v) - J_w(\pi_v)) + (\underbrace{J_w(\pi_v) - J_w(\pi_w)}_{\le 0 \  \mathrm{by \ definition \ of \ } \pi_w}) + (J_w(\pi_w) - J_v(\pi_w)) \\
        &\le (v - w)^\top (\pi_v - \pi_w) = \sum_{i = 1}^d (w_i - v_i)((\pi_w)_i - (\pi_v)_i).
    \end{align*}
    From Lemma \ref{lem: projected-vectors-are-also-close} (proven below), we get
    \begin{align*}
        &\le \sum_{i = 1}^d |w_i - v_i||(\pi_w)_i - (\pi_v)_i| \\
        &\le \delta' \sum_{i = 1}^d |(\pi_w)_i - (\pi_v)_i| + \varepsilon' \sum_{i = 1}^d v_i |(\pi_w)_i - (\pi_v)_i| \\
        &\le 2 \delta' R_{\mathrm{max}} + \varepsilon' |v^\top \pi_w| + \varepsilon' |v^\top \pi_v| \\
        &\le 2 \delta' R_{\mathrm{max}} + \varepsilon' \left| J_v(\pi_w) + f(\pi_w)\right| + \varepsilon' \left| J_v(\pi_v) + f(\pi_v)\right| \\
        &\le 2 \delta' R_{\mathrm{max}} + \varepsilon' |J_v(\pi_w)| + \varepsilon' f_{\max} + \varepsilon' |J_v(\pi_v)| + \varepsilon' f_{\max} \\
        &\le 2 (\delta' R_{\mathrm{max}} + \varepsilon' f_{\max} + \varepsilon' \  \textbf{opt}_v).
    \end{align*}
    
    Therefore,
    \[
        J_v(\pi_w) \ge (1 - 2 \varepsilon') \ \textbf{opt}_v - 2 (\delta' R_{\mathrm{max}} + \varepsilon' f_{\max}). \qedhere
    \]
\end{proof}

\section{Implementation details}


We provide implementation details for the pruning and fine-tuning steps of PALM in our experiments.

\subsection{Pruning}\label{sec: pruning-implementation-details}

For solving the covering problem in Step~\ref{step:set-cover} of the pruning algorithm in PALM (Algorithm~\ref{alg: grid-search-bicriteria-portfolio}), we use the natural greedy algorithm, which iteratively selects the policy that covers the largest number of remaining weight vectors in the grid $W$ until all weight vectors have been covered. The notion of covering is defined in Step~\ref{step:covering-definition} of the algorithm. 

Although the pruning algorithm takes two inputs $\mu'$ and $\alpha'$---which in our theoretical guarantee are assumed to match the inputs to \textsc{ConstructGrid} with $\alpha' = 0$---in practice both can be varied independently. For our experiments, we vary $\mu'$ and fix $\alpha' = 0$ for simplicity. Since pruning operates on a fixed set of candidate policies, evaluating multiple values of $\mu'$ is computationally cheap.

\subsection{Fine-tuning implementation details}\label{app:finetuning}

\textbf{Task and dataset.}
We consider a two-objective setting ($d = 2$) over mathematical reasoning prompts, where the objectives are \emph{response brevity} and \emph{helpfulness}. Prompts are drawn from the training splits of RLVR-GSM and RLVR-MATH \citep{lambert2024rlvr}, and evaluation is performed on the RLVR-GSM test split.

\textbf{Reward functions.}
The helpfulness objective is scored by the Skywork-Reward-Llama-3.1-8B reward model \citep{liu2024skywork}, a Bradley--Terry style pairwise reward model that produces unbounded scalar outputs. The brevity objective is a verifiable reward (VR) defined as
\begin{equation}
    r_{\text{brevity}}(x, y) = \mathbb{1}[\text{correct}(x, y)] \cdot \left(1 - \frac{|y|}{L_{\max}}\right),
\end{equation}
where $|y|$ is the response length in tokens, $L_{\max} = 256$ is the maximum generation length, and $\mathbb{1}[\text{correct}(x, y)]$ is a binary correctness indicator obtained by comparing the extracted answer against the ground-truth label. Multiplying by the correctness indicator ensures that the model cannot achieve high brevity reward by producing short but incorrect responses. Both rewards are normalized to $[0, 1]$ via min-max normalization within each training batch.

\textbf{Policy model and training.}
The policy model is Qwen2.5-3B-Instruct \citep{qwen2.5}. We fine-tune using Group Relative Policy Optimization (GRPO; \citealt{shao2024grpo}), which estimates advantages by sampling a group of $G$ responses per prompt and normalizing rewards within each group. We use $G = 4$ samples per prompt. The training objective includes a KL divergence penalty with coefficient $\beta = 0.01$, where the reference policy is a frozen copy of the initial Qwen2.5-3B-Instruct checkpoint. Our training infrastructure is based on the Open Instruct framework \citep{lambert2024tulu3}.  

\textbf{Portfolio construction.}
We apply Algorithm~\ref{alg: grid-search-bicriteria-portfolio} with parameters $\mu = \frac{8}{30}$ and $\alpha = \frac{2}{10}$, which yields 13 weight vectors in the 2-dimensional simplex $\simplex_2$.  

\textbf{Evaluation.}
Pruning with $0.250, 0.050, 0.001, 0.0005$ yields portfolios of sizes 1, 2, 3, 4 models respectively.  We normalize each KL-regularized individual reward, $\mathbb{E}_{x \sim P, y \sim \pi(x)}[r_i(x,y)] - \mathrm{KL}(\pi \| \pi_{\mathrm{ref}})$, to $[0,1]$ for pruning and evaluation. We evaluate approximation quality by sampling $15$ weight vectors for $\mathcal{V}$ from $\text{Dir}(1, 1)$ and optimizing the corresponding scalarized objective with GRPO.  The perplexity of the policy usage distribution is computed over $N = 1{,}000$ weight vectors, assigning each $w$ to its best policy $\pi^{\text{best}}w = \arg\max{\pi \in P} J_w(\pi)$. Since this evaluation does not require fine-tuning for each weight vector, we use a larger set of size $N$ rather than $\mathcal{V}$.

\textbf{Training details.}
We train for 40{,}000 episodes, corresponding to approximately 208 gradient steps with an effective batch size of 48 (4 per-device batch size $\times$ 3 GPUs $\times$ 4 gradient accumulation steps). We use the AdamW optimizer with a learning rate of $5 \times 10^{-7}$ and mixed precision bf16. All experiments are conducted on 4$\times$ NVIDIA H100 80GB GPUs, with 3 GPUs used for training and 1 reserved for vLLM inference during rollout generation.

\subsection{Implementation details for MOD}

The implementation of MOD is directly adopted from the authors' official codebase provided in \citep{shi2024decoding}.  MOD combines $d$ LLMs, each trained separately for a single objective, and aggregates them during decoding via weighted logit combination. 
 The $d$ base models, each trained separately for a single objective, are also initialized from the checkpoints released by the authors. In all experiments, we use greedy decoding following their setup. For additional implementation details, we refer the reader to \citep{shi2024decoding}.

For the Safety Alignment task, we use \url{https://huggingface.co/PKU-Alignment/beaver-7b-v1.0-reward} for helpfulness and \url{https://huggingface.co/PKU-Alignment/beaver-7b-v1.0-cost} for harmlessness. We use the first 200 prompts of the \url{https://huggingface.co/datasets/PKU-Alignment/PKU-SafeRLHF-10K} dataset to estimate the objective value \eqref{eq:mainobj} for a given weight. The set $\mathcal{V}$ is generated by sampling 100 points from a Dirichlet distribution.

For the Helpful Assistants task, we use \url{https://huggingface.co/Ray2333/gpt2-large-helpful-reward_model} for helpfulness, \url{https://huggingface.co/Ray2333/gpt2-large-harmless-reward_model} for harmlessness, and \url{https://huggingface.co/mohameddhiab/humor-no-humor} for humor. We use the first 100 prompts of the \url{https://huggingface.co/datasets/Anthropic/hh-rlhf} dataset to estimate the objective value \eqref{eq:mainobj} for a given weight. The set $\mathcal{V}$ is generated by sampling 200 points from a Dirichlet distribution.

Since the explicit policy $\pi$ is unavailable without fine-tuning, we cannot compute the KL term in $J_w(\pi)$. We therefore evaluate only the reward component $\mathbb{E}_{x,y}[w^\top r(x,y)]$ via Monte Carlo sampling, consistent with prior decoding-time works \citep{shi2024decoding, shen2025simultaneous}.

\paragraph{Portfolio Construction}

We apply Algorithm~\ref{alg: grid-search-bicriteria-portfolio} with a range of input parameters and report the best result for each portfolio size. The specific parameters used for the results in the main text are as follows. For Safety Alignment ($d=2$): size 3 uses $(\mu,\alpha,\mu') = (14/30, 7/100, 0.02)$; size 4 uses $(16/30, 1/100, 0.01)$; size 5 uses $(16/30, 1/100, 0.005)$. For Helpful Assistants ($d=3$): size 3 uses $(19/30, 1/5, 0.10)$; size 4 uses $(19/30, 2/5, 0.07)$; size 5 uses $(19/30, 2/5, 0.01)$; size 6 uses $(19/30, 2/5, 0.001)$; size 7 uses $(3/5, 3/10, 0.01)$. The full set of results across all parameter configurations is reported in Appendix~\ref{append:full_results}.

\subsection{Further details on baselines}

\subsubsection{Uniform}
\label{append:uniform_baseline}

For the $d=2$ case, the simplex reduces to the line segment $\simplex_2 = \{ (w_1,w_2) \in \mathbb{R}^2_{\ge 0} : w_1 + w_2 = 1 \}$. We generate $n$ evenly spaced weight vectors by discretizing this segment uniformly, setting $w_1 \in \{0, \frac{1}{n-1}, \frac{2}{n-1}, \dots, 1\}$ and $w_2 = 1 - w_1$. This yields $n$ equally spaced points including the endpoints $(1,0)$ and $(0,1)$. 

To uniformly generate $n$ weight vectors on $\simplex_3$, we first construct a uniform barycentric grid by considering all points of the form $(i/m, j/m, k/m)$ where $i,j,k \in \mathbb{Z}_{\ge 0}$ and $i+j+k=m$, which yields $(m+1)(m+2)/2$ evenly spaced points on the simplex. We choose the smallest $m$ such that $(m+1)(m+2)/2 \ge n$, and then sample $n$ weight vectors uniformly at random without replacement from this grid.

\subsubsection{Random}

For the random baseline, due to its inherent stochasticity, we perform three independent runs for each portfolio size and report the average performance for the Safety Alignment and Helpful Assistants tasks. For the RLVR-GSM task, we run the baseline only once. 

\subsection{Additional results}

\subsubsection{Analyzing the effect of initial weight generation}

To isolate the effect of the initial weight generation scheme in PALM, we conduct ablation an study that replaces this step with uniformly spaced weights, followed by the same pruning procedure. We refer to this variant as Uniform PALM. To enable a fair comparison with the original PALM, we match the number of oracle calls, i.e., the size of the initial weight set. We use a common pruning parameter in the range $[0.01, 0.1]$ with a step size of $0.01$ across all settings, and report, for each oracle call, the best result in Table~\ref{tab:ours_vs_large_uniform}. We observe that the original PALM outperforms this hybrid variant in all but one case, further validating the effectiveness of our weight generation scheme.

\begin{table}[t]
\centering
\footnotesize
\caption{Approximation quality of portfolios across different methods. Each entry reports $(\varepsilon(P), \delta(P))$, corresponding to the purely multiplicative gap and purely additive gap, for a given number of oracle calls. Smaller values indicate better performance. All reward values are normalized to the range $[0,1]$ in each dimension.}
\label{tab:ours_vs_large_uniform}
\begin{tabular}{lcc}
\toprule
& \multicolumn{2}{c}{\textbf{Method}} \\
\cmidrule(lr){2-3}
\textbf{Oracle calls} & \textbf{PALM} & \textbf{Uniform PALM} \\
\midrule

\multicolumn{3}{l}{\textbf{Safety Alignment}} \\
\addlinespace[1pt]
17 & \textbf{(0.0197, 0.0116)} & (0.0353, 0.0240) \\
19 & \textbf{(0.0204, 0.0119)} & (0.0278, 0.0163) \\
23 & \textbf{(0.0204, 0.0119)} & (0.0353, 0.0240) \\
25 & \textbf{(0.0131, 0.0076)} & (0.0154, 0.0089) \\

\addlinespace[3pt]
\multicolumn{3}{l}{\textbf{Helpful Assistants}} \\
\addlinespace[1pt]
19 & \textbf{(0.0244, 0.0177)} & (0.0861, 0.0828) \\
37 & \textbf{(0.0151, 0.0094)} & (0.0713, 0.0685) \\
61 & \textbf{(0.0145, 0.0107)} & (0.0174, 0.0093) \\
91 & (0.0166, 0.0102) & \textbf{(0.0151, 0.0081)} \\

\bottomrule
\end{tabular}
\end{table}

\subsubsection{Full experimental results}
\label{append:full_results}

See Tables~\ref{tab:d2_full} and~\ref{tab:d3_full} for full experimental results. The columns $\mu$, $\alpha$, and $\mu'$ denote the input parameters to \textsc{ConstructGrid} and \textsc{Prune}, respectively (recall the discussion in Appendix~\ref{sec: pruning-implementation-details} on varying the pruning parameter independently). We set $\mu$ to relatively loose (i.e., large) values and $\alpha$ to tight (i.e., small) values, since rewards are normalized to $[0,1]$ and a large additive gap would be highly suboptimal in this range. We did not tighten $\mu$ further in order to keep the number of initial weights tractable. We also vary $\alpha$ and $\mu$ across runs so that the number of oracle calls differs across experimental configurations. We vary the pruning parameter $\mu'$ more broadly compared to $\mu$ and $\alpha$, since evaluating different pruning thresholds is computationally cheap given a fixed set of candidate policies. We bold the rows corresponding to the results reported in the main text.

\begin{table}[t]
\centering
\setlength{\textfloatsep}{6pt}
\renewcommand{\arraystretch}{0.75}
\setlength{\extrarowheight}{-1pt}
\begin{tabular}{ccccccc}
\hline
Input $\mu$ & Input $\alpha$ & Oracle calls & Pruning & Size & $\varepsilon(P)$ & $\delta(P)$ \\
\hline

\multirow{10}{*}{$\frac{14}{30}$} & \multirow{10}{*}{$\frac{7}{100}$} & \multirow{10}{*}{17}
& 0.01 & 4 & 0.0368 & 0.0235 \\
& & & \textbf{0.02} & \textbf{3} & \textbf{0.0552} & \textbf{0.0352} \\
& & & 0.03 & 3 & 0.0637 & 0.0400 \\
& & & 0.04 & 3 & 0.0637 & 0.0400 \\
& & & 0.05 & 3 & 0.0637 & 0.0400 \\
& & & 0.06 & 3 & 0.0637 & 0.0400 \\
& & & 0.07 & 3 & 0.0637 & 0.0400 \\
& & & 0.08 & 3 & 0.0854 & 0.0748 \\
& & & 0.09 & 2 & 0.0894 & 0.0836 \\
& & & 0.10 & 2 & 0.0894 & 0.0836 \\
\hline

\multirow{10}{*}{$\frac{16}{30}$} & \multirow{10}{*}{$\frac{7}{100}$} & \multirow{10}{*}{17}
& 0.01 & 4 & 0.0197 & 0.0116 \\
& & & 0.02 & 4 & 0.0197 & 0.0116 \\
& & & 0.03 & 4 & 0.0197 & 0.0116 \\
& & & 0.04 & 4 & 0.0197 & 0.0116 \\
& & & 0.05 & 3 & 0.0701 & 0.0409 \\
& & & 0.06 & 2 & 0.0701 & 0.0593 \\
& & & 0.07 & 2 & 0.0701 & 0.0593 \\
& & & 0.08 & 2 & 0.0701 & 0.0593 \\
& & & 0.09 & 2 & 0.0701 & 0.0593 \\
& & & 0.10 & 2 & 0.0701 & 0.0593 \\
\hline

\multirow{10}{*}{$\frac{16}{30}$} & \multirow{10}{*}{$\frac{5}{100}$} & \multirow{10}{*}{19}
& 0.01 & 4 & 0.0204 & 0.0119 \\
& & & 0.02 & 4 & 0.0204 & 0.0119 \\
& & & 0.03 & 3 & 0.0700 & 0.0408 \\
& & & 0.04 & 3 & 0.0700 & 0.0408 \\
& & & 0.05 & 3 & 0.0700 & 0.0408 \\
& & & \textbf{0.06} & \textbf{2} & \textbf{0.0700} & \textbf{0.0585} \\
& & & 0.07 & 2 & 0.0700 & 0.0585 \\
& & & 0.08 & 2 & 0.0700 & 0.0585 \\
& & & 0.09 & 2 & 0.0700 & 0.0585 \\
& & & 0.10 & 2 & 0.0700 & 0.0585 \\
\hline

\multirow{10}{*}{$\frac{16}{30}$} & \multirow{10}{*}{$\frac{2}{100}$} & \multirow{10}{*}{23}
& 0.01 & 4 & 0.0204 & 0.0119 \\
& & & 0.02 & 4 & 0.0204 & 0.0119 \\
& & & 0.03 & 4 & 0.0204 & 0.0119 \\
& & & 0.04 & 3 & 0.0700 & 0.0408 \\
& & & 0.05 & 3 & 0.0700 & 0.0408 \\
& & & 0.06 & 2 & 0.0700 & 0.0585 \\
& & & 0.07 & 2 & 0.0700 & 0.0585 \\
& & & 0.08 & 2 & 0.0700 & 0.0585 \\
& & & 0.09 & 2 & 0.0700 & 0.0585 \\
& & & 0.10 & 2 & 0.0700 & 0.0585 \\
\hline

\multirow{11}{*}{$\frac{16}{30}$} & \multirow{11}{*}{$\frac{1}{100}$} & \multirow{11}{*}{25}
& \textbf{0.005} & \textbf{5} & \textbf{0.0131} & \textbf{0.0076} \\
& & & \textbf{0.01} & \textbf{4} & \textbf{0.0131} & \textbf{0.0076} \\
& & & 0.02 & 4 & 0.0204 & 0.0119 \\
& & & 0.03 & 3 & 0.0700 & 0.0408 \\
& & & 0.04 & 3 & 0.0700 & 0.0408 \\
& & & 0.05 & 3 & 0.0777 & 0.0454 \\
& & & 0.06 & 2 & 0.0700 & 0.0585 \\
& & & 0.07 & 2 & 0.0700 & 0.0585 \\
& & & 0.08 & 2 & 0.0700 & 0.0585 \\
& & & 0.09 & 2 & 0.0700 & 0.0585 \\
& & & 0.10 & 2 & 0.0700 & 0.0585 \\
\hline

\end{tabular}
\caption{Experimental results across various parameters for the Safety Alignment task.}
\label{tab:d2_full}
\end{table}

\begin{table}[t]
\centering
\setlength{\textfloatsep}{6pt}
\renewcommand{\arraystretch}{0.75}
\setlength{\extrarowheight}{-1pt}
\begin{tabular}{ccccccc}
\hline
Input $\mu$ & Input $\alpha$ & Oracle calls & Pruning $\mu'$ & Size & $\varepsilon(P)$ & $\delta(P)$ \\
\hline

\multirow{10}{*}{$\frac{18}{30}$} & \multirow{10}{*}{$\frac{8}{10}$} & \multirow{10}{*}{19}
& 0.01 & 5 & 0.0244 & 0.0177 \\
& & & 0.02 & 5 & 0.0331 & 0.0177 \\
& & & 0.03 & 5 & 0.0331 & 0.0177 \\
& & & 0.04 & 5 & 0.0331 & 0.0177 \\
& & & 0.05 & 5 & 0.0405 & 0.0214 \\
& & & 0.06 & 5 & 0.0405 & 0.0214 \\
& & & 0.07 & 5 & 0.0405 & 0.0214 \\
& & & 0.08 & 4 & 0.0703 & 0.0490 \\
& & & 0.09 & 4 & 0.0703 & 0.0490 \\
& & & 0.10 & 4 & 0.0900 & 0.0847 \\
\hline

\multirow{11}{*}{$\frac{19}{30}$} & \multirow{11}{*}{$\frac{4}{10}$} & \multirow{11}{*}{37}
& \textbf{0.001} & \textbf{6} & \textbf{0.0145} & \textbf{0.0092} \\
& & & \textbf{0.01} & \textbf{5} & \textbf{0.0151} & \textbf{0.0094} \\
& & & 0.02 & 5 & 0.0151 & 0.0094 \\
& & & 0.03 & 5 & 0.0344 & 0.0193 \\
& & & 0.04 & 5 & 0.0211 & 0.0119 \\
& & & 0.05 & 5 & 0.0211 & 0.0119 \\
& & & 0.06 & 5 & 0.0211 & 0.0119 \\
& & & \textbf{0.07} & \textbf{4} & \textbf{0.0569} & \textbf{0.0397} \\
& & & 0.08 & 4 & 0.0569 & 0.0397 \\
& & & 0.09 & 4 & 0.0726 & 0.0683 \\
& & & 0.10 & 3 & 0.0726 & 0.0683 \\
\hline

\multirow{10}{*}{$\frac{18}{30}$} & \multirow{10}{*}{$\frac{3}{10}$} & \multirow{10}{*}{61}
& \textbf{0.01} & \textbf{7} & \textbf{0.0145} & \textbf{0.0107} \\
& & & 0.02 & 5 & 0.0208 & 0.0153 \\
& & & 0.03 & 5 & 0.0211 & 0.0153 \\
& & & 0.04 & 5 & 0.0211 & 0.0153 \\
& & & 0.05 & 5 & 0.0451 & 0.0239 \\
& & & 0.06 & 5 & 0.0371 & 0.0210 \\
& & & 0.07 & 4 & 0.0569 & 0.0525 \\
& & & 0.08 & 4 & 0.0569 & 0.0525 \\
& & & 0.09 & 4 & 0.0569 & 0.0525 \\
& & & 0.10 & 4 & 0.0925 & 0.0871 \\
\hline

\multirow{10}{*}{$\frac{19}{30}$} & \multirow{10}{*}{$\frac{2}{10}$} & \multirow{10}{*}{91}
& \textbf{0.01} & \textbf{10} & \textbf{0.0166} & \textbf{0.0102} \\
& & & 0.02 & 6 & 0.0193 & 0.0150 \\
& & & 0.03 & 5 & 0.0431 & 0.0244 \\
& & & 0.04 & 5 & 0.0381 & 0.0216 \\
& & & 0.05 & 5 & 0.0381 & 0.0228 \\
& & & 0.06 & 4 & 0.0569 & 0.0530 \\
& & & 0.07 & 4 & 0.0569 & 0.0397 \\
& & & 0.08 & 4 & 0.0842 & 0.0749 \\
& & & 0.09 & 3 & 0.1105 & 0.0900 \\
& & & \textbf{0.10} & \textbf{3} & \textbf{0.0709} & \textbf{0.0530} \\
\hline

\end{tabular}
\caption{Experimental results across various parameters for the Helpful Assistants task.}
\label{tab:d3_full}
\end{table}

\end{document}